\documentclass{article} 
\usepackage{arxiv,times}

\usepackage{amsmath,amsfonts,bm}

\def\eqref#1{equation~\ref{#1}}

\def\1{\bm{1}}

\DeclareMathAlphabet{\mathsfit}{\encodingdefault}{\sfdefault}{m}{sl}
\SetMathAlphabet{\mathsfit}{bold}{\encodingdefault}{\sfdefault}{bx}{n}

\usepackage{wrapfig}
\usepackage{hyperref}
\usepackage{url}
\usepackage{enumitem}
\usepackage{graphicx}
\usepackage{booktabs}
\usepackage{makecell}
\usepackage{multirow}
\usepackage{capt-of}
\usepackage{placeins}
\usepackage{verbatim}
\usepackage{amsthm}
\usepackage{amssymb}
\newtheorem{theorem}{Theorem}
\usepackage{amsmath}
\usepackage{mathtools}
\usepackage{needspace}

\title{Many Ways to Succeed: Diversity-Driven RL Fine-Tuning for VLA Generalization}

\author{
\begin{tabular}{@{}llll@{}}
\textbf{Haoru Li}\textsuperscript{1$\ast$} &
\textbf{Jinmei Liu}\textsuperscript{1$\ast$} &
\textbf{Zhiyong Wang}\textsuperscript{2} &
\textbf{Xiaoming Li}\textsuperscript{1}
\\
\textbf{Zhenhong Sun}\textsuperscript{3} &
\textbf{Daoyi Dong}\textsuperscript{4} &
\textbf{Chunlin Chen}\textsuperscript{1} &
\textbf{Zhi Wang}\textsuperscript{1}
\end{tabular}
\\
\textsuperscript{1} Nanjing University
\hspace{6.3em}
\textsuperscript{2} Harbin Institute of Technology (Shenzhen)
\\
\textsuperscript{3} Australian National University
\qquad
\textsuperscript{4} University of Technology Sydney
\\
\texttt{\{haoruli,jmliu\}@smail.nju.edu.cn}
\qquad
\texttt{zhiwang@nju.edu.cn}
}
\iclrfinalcopy 
\begin{document}

\maketitle

\renewcommand{\thefootnote}{}
\footnotetext{
$^\ast$ Equal contribution.
}
\renewcommand{\thefootnote}{\arabic{footnote}}

\addtocontents{toc}{\protect\setcounter{tocdepth}{-1}}

\begin{abstract}
Reinforcement learning (RL) fine-tuning improves vision-language-action (VLA) policies through closed-loop experience, yet generalization beyond the fine-tuning distribution remains limited. Our analysis reveals a selective reshaping of exploration: RL contracts behavior globally, yet diversifies successful trajectories, elicits success with fewer rollouts, and covers more of the latent task-valid solution space than supervised fine-tuning. Broader successful-mode coverage may provide alternative strategies under distribution shifts. Inspired by this, we introduce \textbf{DRIVE} (\textbf{D}iversity-driven \textbf{R}L f\textbf{I}ne-tuning for \textbf{V}LA g\textbf{E}neralization), which turns successful-behavior diversity into an explicit RL objective. DRIVE groups rollouts under matched task conditions, compares their trajectories with temporal alignment, and derives a success-conditioned intrinsic reward from relative behavioral diversity. This design encourages broader coverage of feasible solutions without rewarding diverse failures or superficial timing differences. Across LIBERO-Plus, ManiSkill3, and RoboTwin~2.0, DRIVE improves the average out-of-domain (OOD) performance over vanilla RL fine-tuning by 5.3 points on \(\pi_0\) and 2.0 points on \(\pi_{0.5}\). On a dual-arm AgileX PiPER-X platform, DRIVE further increases average OOD success from 64.1\% to 73.3\% (+9.2 points), demonstrating gains that persist under physical deployment.
\end{abstract}


\section{Introduction}
\label{sec:introduction}

\begin{wrapfigure}{r}{0.45\columnwidth}
    \centering
    \vspace{-8pt}
    \includegraphics[
        width=0.45\columnwidth
    ]{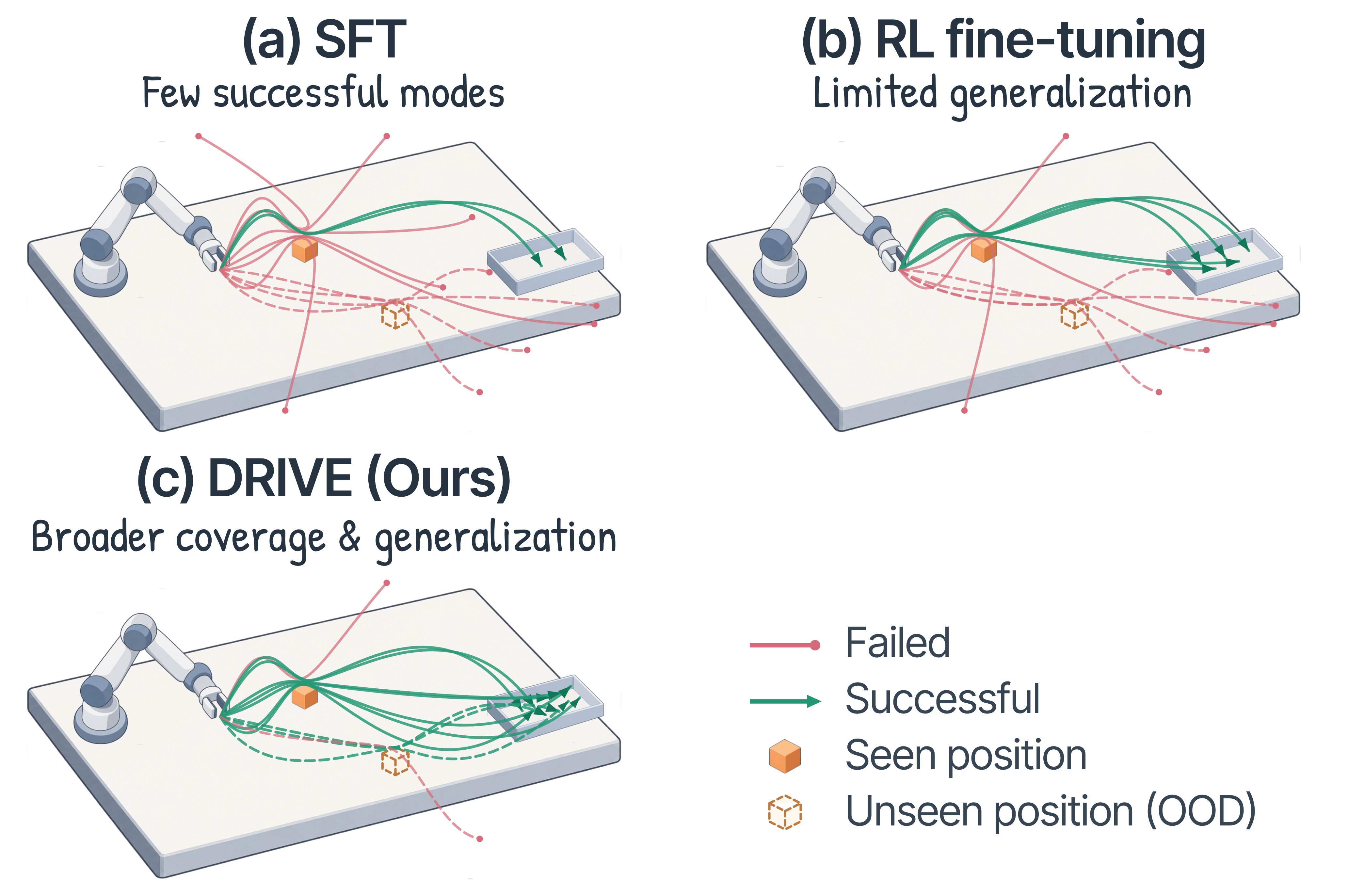}
    \caption{
    Beyond task success, diverse successful trajectories reveal broader solution modes.
    This diversity provides a new opportunity for improving VLA generalization.
    }
    \label{fig:teaser}
    \vspace{-8pt}
\end{wrapfigure}

Recent advances in vision-language-action (VLA) models have substantially expanded the range of manipulation tasks that robots can perform from visual observations and language instructions~\citep{kim2024openvla}. 
VLA models are trained primarily through supervised fine-tuning (SFT) on robot demonstrations, acquiring broad manipulation skills by imitating a fixed behavior distribution~\citep{intelligence2025pi05}. 
Reinforcement fine-tuning (RFT) complements this paradigm with self-generated experience, using outcome feedback to explore beyond demonstrations and optimize closed-loop task performance~\citep{intelligence2025pi}.
Recent studies have shown that RL fine-tuning, particularly with PPO, can improve VLA generalization in semantic understanding and execution robustness~\citep{liu2025rlvla}. 
However, such gains do not transfer uniformly across distribution shifts, and VLA policies remain sensitive to changes in visual appearance, scene configuration, and execution conditions~\citep{fei2025liberoplus}. 
Robust VLA generalization is essential for real-world deployment, yet remains challenging beyond the fine-tuning distribution. 
To understand how RFT reshapes the range of behaviors available to VLA policies, we analyze both how rollout distributions evolve and how readily their successful modes can be elicited (see Sec.~\ref{sec:exploration_dynamics}).
As training progresses, similarity increases across \textit{all} rollouts but decreases among \textit{successful} ones, revealing global behavioral contraction alongside successful-mode diversification.  
Pass@k measures how readily any successful behavior can be elicited, showing that RFT requires far fewer rollouts than SFT to obtain a success. 
Coverage@k instead measures the breadth of these successes, showing that, under the same rollout budget, RFT covers a substantially broader portion of the latent task-valid solution space than SFT.
Together, these results suggest that RFT makes task-valid solutions more accessible even as the overall behavior distribution contracts. Broader coverage of this space may provide alternative strategies under distribution shifts, revealing \textit{an underexplored opportunity to improve generalization through successful-behavior diversity}.
 
Turning this opportunity into a useful training objective is nontrivial. Standard binary task rewards distinguish success from failure but not among different successful solutions, providing no explicit incentive to maintain broad successful-mode coverage. 

Although diversity has been studied in unsupervised skill discovery~\citep{eysenbach2019diayn}, policy diversification~\citep{masood2019dipg}, and embodied exploration~\citep{chen2026curiousvla}, directly applying these approaches to VLA fine-tuning can reward superficial differences caused by execution speed, temporal misalignment, or irrelevant motion, as well as noisy and unsuccessful behaviors. 
A suitable diversity objective must therefore compare trajectories under matched task conditions, account for temporal variation, and reward behavioral differences only when the trajectories remain task-valid.

Motivated by these observations, we propose \textbf{DRIVE} (\textbf{D}iversity-driven \textbf{R}L f\textbf{I}ne-tuning for \textbf{V}LA g\textbf{E}neralization), a framework that turns successful-mode diversity into an explicit optimization target for promoting generalization.
DRIVE groups rollouts collected under matched task conditions, quantifies their behavioral novelty using temporally aligned GAK (global alignment kernel) similarity, and assigns a relative novelty bonus only to successful trajectories.
This design focuses the diversity incentive on distinct task-valid solutions rather than arbitrary variation across tasks, timing, or failures. 
Then, we formulate the intra-group diversity as an intrinsic reward and design a potential-based reward shaping scheme to preserve optimal policy invariance. 
By encouraging broader coverage of successful behaviors during fine-tuning, DRIVE provides alternative task-solving strategies that may transfer under distribution shifts. 
Across LIBERO-Plus, ManiSkill3, and RoboTwin~2.0, DRIVE improves 11 of 12 benchmark-split results on each backbone, raising the OOD macro-average from 48.1\% to 53.4\% for \(\pi_0\) and from 71.1\% to 73.1\% for \(\pi_{0.5}\). On a dual-arm AgileX PiPER-X platform, DRIVE further increases average OOD success across two tasks and three distribution shifts from 64.1\% to 73.3\% (+9.2 points).
Together, these results validate successful-mode diversity as a practical objective across simulated distribution shifts and physical deployment.

In summary, our main contributions are threefold:
\begin{itemize}[itemsep=0.25em, leftmargin=1.25em]\vspace{-0.5em}

    \item We show that global behavioral contraction coexists with successful-mode diversification, revealing successful-behavior diversity as an underexplored route to VLA generalization.

    \item We introduce DRIVE, which converts temporally aligned comparisons of grouped trajectories into a success-conditioned intrinsic reward for diverse task-valid exploration.

    \item We demonstrate consistent generalization gains across LIBERO-Plus, ManiSkill, and RoboTwin, and confirm that DRIVE’s improvements persist under physical deployment.
\end{itemize}

\section{ Related Work}
\label{sec:related_work}

\textbf{VLA Models and Generalization.}
VLA models map visual observations and language instructions to robot actions, typically combining pretrained vision-language representations with supervised learning from robot demonstrations~\citep{kim2024openvla,black2025pi0,xie2026look}. However, systematic evaluations reveal sensitivity to changes in visual appearance, scene configurations, robot initial states, and execution conditions~\citep{fei2025liberoplus,liu2025rlvla}. To improve generalization, prior work has explored data augmentation and domain randomization~\citep{chen2025robotwin,xue2025resample}, large-scale pretraining and heterogeneous co-training~\citep{zitkovich2023rt2,intelligence2025pi05}, and model designs that strengthen spatial representations and embodied reasoning~\citep{qu2025spatialvla,zawalski2024ecot}. These efforts primarily focus on training data and model capabilities, with less attention to how behavior distributions shaped during RFT affect generalization.

\textbf{Reinforcement Fine-Tuning for VLA.}
RFT complements demonstration-based SFT with self-generated experience and task-level feedback from closed-loop interaction, with successful deployments in LLMs~\citep{guo2025deepseek,yan2025learning,zhan2026exgrpo}, VLMs~\citep{liu2025visual,hu2026bridging}, and VLAs~\citep{intelligence2025pi,liu2025rlvla}. 
Recent PPO-based~\citep{lu2025vla,tan2025interactive} and GRPO-based methods~\citep{li2026simplevla} use this feedback to optimize task-completion or progress rewards. Empirical studies report that such fine-tuning improves task success over SFT and can enhance OOD generalization, particularly in semantic understanding and execution robustness~\citep{liu2025rlvla,chen2025pirl}. 
However, this line of work primarily examines the performance benefits of task-reward optimization, with less attention to how successful-behavior distributions evolve during RFT or how broader coverage of the successful behavior space can be explicitly promoted.
\textbf{Behavioral Diversity and Exploration.}
Unsupervised skill discovery promotes diverse behaviors without task rewards~\citep{eysenbach2019diayn,sharma2020dads}, while intrinsic-motivation methods encourage exploration through prediction-based novelty rewards~\citep{pathak2017curiosity,burda2019rnd}. 

Task-aware methods instead combine return maximization with differences between policy-induced trajectory distributions~\citep{masood2019dipg} or encourage diversity among near-optimal behaviors~\citep{kumar2020smerl,hu2026diversity,liu2026beyond}.
In VLA learning, diversity has been used to guide exploration through trajectory expansion and behavior conditioning~\citep{chen2026curiousvla,kong2026extoken}, and to enrich pretraining data with diverse successful trajectories~\citep{yang2025dlr}. 
In contrast, DRIVE applies a success-conditioned diversity reward across rollouts under matched task conditions, targeting broader task-valid solution coverage rather than global behavioral variation.

\section{Exploration Dynamics in VLA Reinforcement Fine-Tuning}
\label{sec:exploration_dynamics}

We study how RFT reshapes the diversity and accessibility of successful behaviors in \(\pi_0\) and \(\pi_{0.5}\) on ManiSkill3~\citep{taomaniskill3}, evaluating the SFT and intermediate/final RFT checkpoints. Using SDE sampling under matched task conditions, we measure trajectory similarity separately across all and successful rollouts. For the sampling-budget analysis, we evaluate SFT and final-RFT \(\pi_0\) on 32 task-initial-state configurations, sampling up to 256 rollouts per policy and configuration.

\begin{figure*}[t]
    \centering
    \includegraphics[
    width=\textwidth,
    trim=0 5pt 0 0pt,
    clip
    ]{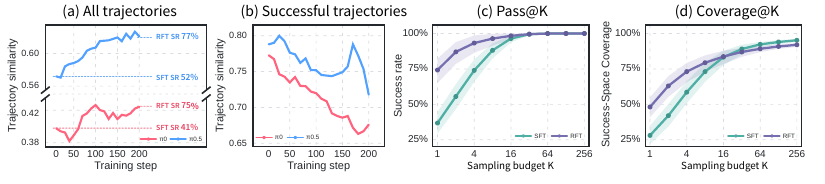}
    \caption{
    Exploration dynamics and successful-behavior accessibility in VLA reinforcement fine-tuning.
    (a--b) During RFT, similarity increases across all rollouts while decreasing among successful trajectories; SR denotes success rate.
    (c) Pass@\(K\) shows that the final RFT policy elicits success with fewer rollouts than SFT.
    (d) Coverage@\(K\), measured against SFT successful trajectories at \(K=256\), shows broader reference-repertoire coverage under limited sampling.
    }
    \label{fig:rft_exploration_dynamics}
\end{figure*}

\textbf{Global Contraction Masks Successful-Mode Diversification.}
Fig.~\ref{fig:rft_exploration_dynamics}(a--b) reveals a contrasting trend: normalized Global Alignment Kernel (GAK) similarity increases across all rollouts but decreases among successful ones as RFT progresses. Meanwhile, success rates rise from 41\% to 75\% for \(\pi_0\) and from 52\% to 77\% for \(\pi_{0.5}\). Thus, RFT concentrates the overall behavior distribution while improving task success and diversifying successful executions. Global contraction therefore masks successful-mode diversification rather than implying a collapse of task-valid exploration.

\textbf{RFT Makes Latent Successful Behaviors More Accessible.}
We measure the accessibility of successful behavior using Pass@\(K\), the probability that at least one of \(K\) rollouts succeeds for a task-initial-state configuration. Fig.~\ref{fig:rft_exploration_dynamics}(c) shows that RFT achieves substantially higher Pass@\(K\) at small \(K\), while the gap narrows as SFT saturates with larger budgets. This pattern suggests that successful behavior is often latent in SFT but difficult to elicit. RFT makes such behavior accessible with far fewer rollouts, converting sparse success into reliable execution under limited sampling.

\textbf{Elicitation Spans a Broad Task-Valid Solution Space.}
Pass@\(K\) does not distinguish between repeatedly sampling a few dominant modes and accessing a broader solution set. To measure breadth, we use SFT successes at \(K\!=\!256\) as a reference and compute Coverage@\(K\) through normalized GAK matching, with leave-one-out matching for SFT queries to avoid trivial self-matches and ensure a fair comparison. As shown in Fig.~\ref{fig:rft_exploration_dynamics}(d), RFT achieves substantially higher coverage at small \(K\) and matches SFT coverage with far fewer rollouts. Thus, RFT improves both the accessibility of success and the breadth of task-valid solutions available under limited sampling.

Taken together, RFT contracts behavior globally while expanding access to diverse task-valid solutions, revealing \textit{an underexplored opportunity}: broader successful-mode coverage may provide alternative strategies under distribution shifts. Since standard rewards do not distinguish among successful solutions, the central question behind DRIVE is \textit{whether explicitly incentivizing their diversity can broaden task-valid exploration and improve VLA generalization}.

\begin{figure*}[t]
    \centering
    \includegraphics[
    width=\textwidth,
    ]{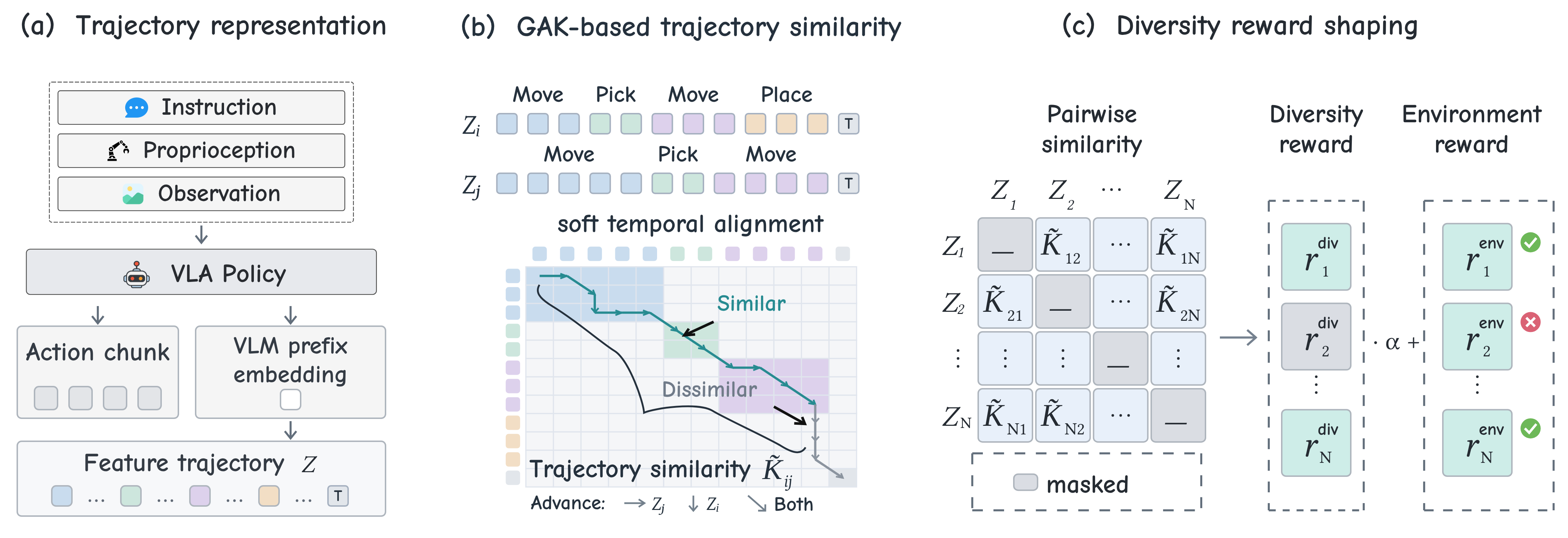}
        \caption{
        Overview of DRIVE.
        (a) Rollouts are encoded as per-decision feature trajectories.
        (b) GAK computes behavioral similarity by softly aligning feature sequences over monotonic temporal paths, handling execution-rate mismatch.
        (c) Pairwise similarities are converted into success-conditioned diversity rewards and added to environment rewards to promote exploration diversity.
        }
    \label{fig:main}
\end{figure*}

\section{More Ways to Succeed with DRIVE}
\label{sec:method}

In this section, we present DRIVE, a diversity-driven framework for VLA RFT. We first formulate the RFT objective and motivate behavior-aware rewards, then derive temporally aligned trajectory similarity from grouped rollouts, and finally convert relative successful-behavior diversity into a normalized intrinsic reward for success-conditioned shaping.

\subsection{Problem Formulation}
\label{sec:problem_formulation}
We consider a language-conditioned embodied control problem modeled as a Markov decision process (MDP) \(\mathcal{M}=(\mathcal{S},\mathcal{A},P_0,P,R,\gamma)\), where \(P_0\), \(P\), \(R\), and \(\gamma\) denote the initial-state distribution, environment dynamics, environment reward, and discount factor, respectively.
Given a language instruction \(l\), a VLA policy produces \(a_t \sim \pi_\theta(\cdot\mid s_t,l)\), where \(s_t\) includes visual and proprioceptive observations and \(a_t\) denotes either a single control action or an action chunk.
A rollout is \(\tau=(s_0,a_0,\ldots,s_{T-1},a_{T-1},s_T)\), and \(r_t^{\mathrm{env}}\) denotes the environment reward at step \(t\).

In RFT of VLA models, policy-gradient methods update the policy using rewards from environment rollouts to favor higher-return behaviors~\citep{li2026simplevla}. We adopt PPO~\citep{schulman2017ppo}, which constrains the update through a clipped importance ratio, with the surrogate objective
\begin{equation}
\mathcal{L}_{\mathrm{PPO}}(\theta)
=
\mathbb{E}_t
\left[
\min\left(
\rho_t(\theta)\hat{A}_t,\,
\operatorname{clip}\!\left(\rho_t(\theta),1-\epsilon,1+\epsilon\right)\hat{A}_t
\right)
\right],
\end{equation}
where \(\rho_t(\theta)=\pi_\theta(a_t\mid s_t,l)/\pi_{\theta_{\mathrm{old}}}(a_t\mid s_t,l)\) is the importance ratio and \(\hat{A}_t\) is the estimated advantage, computed from environment rewards and value estimates.
Because PPO advantages are derived from rewards, reward design determines which behaviors RFT reinforces. Standard task rewards capture accomplishment or progress but treat behaviorally distinct successes alike, motivating trajectory-level similarity as a basis for rewarding diverse task-valid solutions.

\subsection{Deriving Trajectory-Level Behavioral Similarity}
\label{sec:trajectory_similarity}

To distinguish different task-valid solution modes, we require a trajectory-level measure of behavioral similarity between rollouts.
We compare rollouts only within matched groups that share the same task and environment condition, so that the resulting similarity reflects differences in execution rather than differences in task specification or initial conditions.

We represent each trajectory as an ordered sequence of per-decision embeddings.
For rollout \(i\) with \(L_i\) executed action chunks, let \(z_{i,c}\) denote the feature representation of the observation at policy decision \(c\), for \(c=0,\ldots,L_i-1\).
In our implementation, we reuse the VLM-prefix representation already computed by the policy and mean-pool the valid prefix tokens into one embedding per policy decision. 
After the rollout terminates, we additionally extract the same representation from the terminal observation, yielding 
\begin{equation}
Z_i=(z_{i,0},\ldots,z_{i,L_i}),
\end{equation}
where \(z_{i,L_i}\) denotes the terminal-observation representation.

Behaviorally similar trajectories may progress at different rates, making step-wise comparison sensitive to \textit{temporal misalignment}.
We therefore use the Global Alignment Kernel (GAK)~\citep{cuturi2011gak}, which compares feature sequences while preserving temporal order and accommodating local timing differences.
Let \(G_{i,j}(t,u)\) denote the accumulated alignment similarity between the prefixes \(z_{i,0:t}\) and \(z_{j,0:u}\).
GAK computes this quantity through the dynamic-programming recurrence
\begin{equation}
G_{i,j}(t,u)
=
\kappa(z_{i,t},z_{j,u})
\left[
G_{i,j}(t-1,u)
+
G_{i,j}(t,u-1)
+
G_{i,j}(t-1,u-1)
\right],
\end{equation}
where \(\kappa\) denotes the local similarity between feature embeddings; in our implementation, we use a shifted cosine similarity mapped to \([0,1]\).
The three predecessor transitions allow either sequence, or both, to advance, accommodating local timing differences while preserving temporal order.
By accumulating over these alternatives, GAK accounts for all admissible monotonic alignments path rather than selecting a single best alignment.
After both trajectories have been fully traversed, the final dynamic-programming value gives their trajectory-level GAK kernel:
\begin{equation}
K_{\mathrm{GAK}}(Z_i,Z_j)
=
G_{i,j}(|Z_i|,|Z_j|).
\end{equation}
We normalize this kernel using the self-similarities of the two trajectories:
\begin{equation}
\widetilde K_{ij}
=
\frac{K_{\mathrm{GAK}}(Z_i,Z_j)}
{\sqrt{
K_{\mathrm{GAK}}(Z_i,Z_i)
K_{\mathrm{GAK}}(Z_j,Z_j)
}}.
\end{equation}
We use \(\widetilde K_{ij}\) as the normalized trajectory-level behavioral similarity between trajectories \(i\) and \(j\), where larger values indicate greater behavioral similarity.

\subsection{Success-Conditioned Diversity Shaping}
\label{sec:success_conditioned_novelty}

The trajectory similarities derived in Sec.~\ref{sec:trajectory_similarity} quantify behavioral differences but do not directly provide an optimization signal. We first convert them into a relative diversity score, then use this score to construct a success-conditioned potential and derive the DRIVE shaping reward.

\paragraph{Intra-Group Diversity.}
For a matched rollout group \(g\), let
\(\mathcal{T}_g=\{\tau_i\}_{i=1}^{N}\) contain \(N\) trajectories collected under the same task and environment condition. We define the average similarity $\text{sim}_i$ and relative diversity $d_i$ of trajectory \(\tau_i\) as
\begin{equation}
\text{sim}_i=
\frac{1}{N-1}
\sum\nolimits_{j\neq i}
\widetilde K_{ij}, \quad\quad d_i=1-\text{sim}_i.
\end{equation}
A larger \(d_i\) indicates that \(\tau_i\) is more behaviorally distinct from the other rollouts. We compare each trajectory against all group members because both successful and failed rollouts represent behaviors already explored by the policy.

Let \(y_i=\mathbf{1}[\tau_i\text{ succeeds}]\) and
\(\mathcal{S}_g=\{i\mid y_i=1\}\) denote the success indicator and the successful set within the group $g$. To stabilize the reward scale, we normalize diversity within $\mathcal{S}_g$:
\begin{equation}
\hat d_i=\operatorname{clip}\left(\frac{d_i-d_{\min}}{\max(d_{\max}-d_{\min},\delta)},0,1\right), \quad\text{where}~
d_{\min}=\min_{j\in\mathcal{S}_g}d_j,~~ d_{\max}=\max_{j\in\mathcal{S}_g}d_j,
\end{equation}
and \(\delta>0\) prevents negligible differences from being amplified. For \(i\notin\mathcal{S}_g\), we set \(\hat d_i=0\).

\textbf{Success-Conditioned Diversity Potential.}
Let \(s_{i,c}\) denote the state containing the prefix up to action chunk $c$ in trajectory $\tau_i\in g$. 
We define the diversity potential conditioned on task success as
\begin{equation}
    \Phi_g(s_{i,c})=y_i\cdot\hat{d}(s_{i,c}).
\end{equation}
Thus, behavioral diversity contributes potential only when the underlying trajectory successfully completes the task, preventing diverse failures from receiving an exploration bonus.

\textbf{Potential-Based Reward Shaping.}
We define the intrinsic reward as the potential difference between adjacent states as
\begin{equation}
    r^{\mathrm{div}}_{i,c}=\gamma\Phi_g(s_{i,c+1})-\Phi_g(s_{i,c}).
\end{equation}
Since the policy gradient operates at the trajectory level, the discounted return telescopes to
\begin{equation}
    r^{\mathrm{div}}_{\tau_i}=\sum\nolimits_{c=0}^{L_i-1}
\gamma^c r^{\mathrm{div}}_{i,c}=\gamma^{L_i}\Phi_g(s_{i,L_i})-\Phi_g(s_{i,0})=\gamma^{L_i}\cdot y_i\cdot\hat{d}_i,
\end{equation}
where the diversity of an initial state is zero, i.e., $\Phi_g(s_{i,0})=0$.
This formulation avoids the need to calculate the diversity of any intermediate prefixes within the trajectory.
Finally, DRIVE implements trajectory-level diversity bonus as a success-conditioned, potential-based shaping reward as
\begin{equation}
    r^{\mathrm{DRIVE}}_{\tau_i} = r^{\mathrm{env}}_{\tau_i}+\alpha\cdot r^{\mathrm{div}}_{\tau_i},
\end{equation}
where \(\alpha\) controls the strength of the diversity reward relative to the environment reward.

\begin{theorem}[Optimal Policy Invariance]\label{theo:shaping}
Let $M\!=\!(S,A,P,R,\gamma)$ denote the MDP for the VLA RFT task.
$\hat{d}(\cdot)\!:S\mapsto \mathbb{R}$ is a real-valued function that computes the trajectory-level diversity $\hat{d}(s)$ of the state $s$ within a group of rollouts.
We formulate $r^{\text{div}}(\cdot)\!:S\!\times\! A\!\times\! S\mapsto \mathbb{R}$ as an intrinsic reward function that is the difference between trajectory diversities of two adjacent states, such that for all $s\in S, a\in A, s'\in S$, $r^{\text{div}}(s,a,s')=\gamma y\hat{d}(s')-y\hat{d}(s)$, where \(y\in\{0,1\}\) is a binary success indicator. 
Then, with any constant balancing ratio $\alpha$, every optimal policy in the transformed MDP $M'\!=\!(S,A,P,R+\alpha R^{\text{div}},\gamma)$ will also be an optimal policy in $M$, and vice versa.
\end{theorem}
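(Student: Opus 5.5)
The plan is to reduce Theorem~\ref{theo:shaping} to the classical potential-based shaping result of Ng, Harada and Russell. The first step is to identify a state potential. We set $F(s)=\alpha\, y\,\hat d(s)$ and treat $y\hat d(\cdot)$ as a fixed real-valued function on $S$; that is, we view $y$ as part of the (terminal) state information, as in the definition $\Phi_g(s_{i,c})=y_i\hat d(s_{i,c})$. Then the shaping term is $\alpha r^{\text{div}}(s,a,s')=\gamma F(s')-F(s)$, which has exactly the potential-difference form.

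The core step is a comparison of action-value functions. For any policy $\pi$ and any state--action pair $(s,a)$, we expand the discounted return in $M'$ along a trajectory started at $(s,a)$. The shaping contribution is the telescoping sum
\begin{equation}
\sum\nolimits_{k=0}^{H-1}\gamma^k\bigl(\gamma F(s_{k+1})-F(s_k)\bigr)=\gamma^{H}F(s_H)-F(s_0),
\end{equation}
which generalizes the trajectory-level computation leading to $r^{\mathrm{div}}_{\tau_i}$ above. For an infinite horizon, $\gamma^H F(s_H)\to0$ because $F$ is bounded: $\hat d\in[0,1]$ and $y\in\{0,1\}$. For an episodic horizon, we use the convention that the potential of the absorbing state is zero, or that it is the same for all policies. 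Taking expectations then gives
\begin{equation}
Q^\pi_{M'}(s,a)=Q^\pi_M(s,a)-F(s),
\end{equation}
and the same offset holds for the optimal values, $Q^*_{M'}(s,a)=Q^*_M(s,a)-F(s)$. This follows by substituting into the Bellman optimality equation of $M$ and checking that $Q^*_M-F$ satisfies the Bellman optimality equation of $M'$; uniqueness of its fixed point (a $\gamma$-contraction) then identifies it with $Q^*_{M'}$.

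Because the offset $-F(s)$ does not depend on $a$, we get $\arg\max_a Q^*_{M'}(s,a)=\arg\max_a Q^*_M(s,a)$ at every $s$. Hence every optimal policy of $M'$ is optimal in $M$, and conversely, for any constant $\alpha$.

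The main obstacle is justifying that $y\hat d$ is a genuine function of the state, as the statement asserts. In DRIVE, $y$ is the success indicator of the whole trajectory and $\hat d$ depends on the other rollouts in the group, so it is not obviously a Markov function of $s$. I would handle this by taking the statement's hypothesis literally: $\hat d:S\to\mathbb R$ is given, and the state is understood to contain the trajectory prefix, the group context and, at termination, the success outcome, so that $y$ is determined by $s$. This is also why the paper sets $\Phi_g(s_{i,0})=0$ and evaluates only the terminal potential. I would state this modelling assumption explicitly. I would also note that the potential must be held fixed, not re-estimated with $\pi$, for the argument to apply.
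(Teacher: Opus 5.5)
Your proposal is correct and follows essentially the paper's route: treat $\alpha y\hat d$ as a bounded state potential, check that $Q^*_M-\alpha y\hat d$ satisfies the Bellman optimality equation of $M'$, invoke uniqueness of the fixed point, and conclude from the action-independence of the offset that the $\arg\max$ sets coincide. Your per-policy telescoping identity and your explicit caveat that $y\hat d$ must be a fixed, Markov function of the state are extras the paper leaves implicit.

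One correction to your episodic aside. A common nonzero terminal potential $c$ is not enough when termination times vary, since the leftover term $\gamma^{H}c$ depends on $H$ and hence on the policy; only the zero-terminal-potential convention works. DRIVE's terminal potential $y_i\hat d_i$ violates that convention, so the invariance holds in the infinite-horizon discounted formalism that the paper's Bellman argument implicitly uses, not for the truncated episodic return DRIVE actually optimizes.
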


This shows that the success-conditioned diversity reward reshapes the learning signal without changing the set of optimal policies.
The proof is provided in Appendix~\ref{app:policy_invariance}.

\section{Experiments}
\label{sec:experiment}

We evaluate DRIVE from three complementary perspectives: generalization across distribution shifts, behavioral dynamics during RFT, and transfer to real-world deployment.


\textbf{Benchmarks.}
We first evaluate DRIVE on three manipulation benchmarks in simulation (see Sec.~\ref{sec:real_world} for real-robot deployment): LIBERO-Plus, ManiSkill3, and RoboTwin~2.0.
On LIBERO-Plus~\citep{fei2025liberoplus}, we construct IND (in-domain) and OOD (out-of-domain) conditions for the four LIBERO suites: Spatial, Object, Goal, and Long~\citep{liu2023libero}.
For each suite, the RFT/IND pool retains all ten base tasks and uses selected LIBERO-Plus perturbation values, while OOD evaluation uses held-out perturbation values over the same tasks.
On ManiSkill3~\citep{taomaniskill3}, we follow the generalization setting of RL4VLA~\citep{liu2025rlvla}, covering visual-language, semantic, and execution shifts.
On RoboTwin~2.0~\citep{chen2025robotwin}, we adapt the perturbation taxonomy of RoboTwin~2.0-Plus~\citep{zhang2026worldaction} to construct disjoint IND and OOD initialization sets.
Detailed task selections and split definitions are provided in Appendix~\ref{app:benchmark_details}.

\textbf{Baselines and Training.}
We compare DRIVE with Vanilla RFT and three representative exploration or regularization baselines.
\textbf{Vanilla RFT} follows the Flow-SDE-based RFT setup of $\pi_{\mathrm{RL}}$~\citep{chen2025pirl} and uses only the environment reward.
\textbf{Higher Noise} increases stochasticity during Flow-SDE sampling~\citep{chen2025pirl,zou2026precise}.
\textbf{KL Regularization} constrains the policy toward the SFT reference policy through a KL penalty~\citep{gxchen2026kl}.
\textbf{Clip-Higher} relaxes the upper PPO clipping bound following DAPO~\citep{yu2025dapo}.
These baselines intervene at the sampling, policy, and optimization levels, respectively, whereas DRIVE directly shapes successful-behavior diversity.
We evaluate both $\pi_0$~\citep{black2025pi0} and $\pi_{0.5}$~\citep{intelligence2025pi05}, with all methods initialized from the same SFT checkpoint and trained under a common PPO-based Flow-SDE RFT pipeline.
We report IND and OOD success rates, with detailed training hyperparameters and implementation settings provided in Appendix~\ref{app:training_details}.

\subsection{Main Results}
\label{sec:main_results}

\begin{table*}[t]
\centering
\caption{
Main results on simulation benchmarks. Values are task success rates (\%).
RoboTwin~2.0 averages over Click Bell and Press Stapler. Avg. is the macro-average over benchmark families.
Best and second-best results are bold and underlined.
}
\label{tab:main_results}

\small
\setlength{\tabcolsep}{2.2pt}
\renewcommand{\arraystretch}{1.08}

\begin{tabular}{l*{14}{c}}
\toprule
\multirow{2}{*}{Method}
& \multicolumn{2}{c}{\scriptsize\shortstack{LIBERO-Plus\\Spatial}}
& \multicolumn{2}{c}{\scriptsize\shortstack{LIBERO-Plus\\Object}}
& \multicolumn{2}{c}{\scriptsize\shortstack{LIBERO-Plus\\Goal}}
& \multicolumn{2}{c}{\scriptsize\shortstack{LIBERO-Plus\\Long}}
& \multicolumn{2}{c}{ManiSkill3}
& \multicolumn{2}{c}{RoboTwin~2.0}
& \multicolumn{2}{c}{Avg.} \\
\cmidrule(lr){2-3}
\cmidrule(lr){4-5}
\cmidrule(lr){6-7}
\cmidrule(lr){8-9}
\cmidrule(lr){10-11}
\cmidrule(lr){12-13}
\cmidrule(lr){14-15}
& IND & OOD
& IND & OOD
& IND & OOD
& IND & OOD
& IND & OOD
& IND & OOD
& IND & OOD \\
\midrule

\multicolumn{15}{c}{$\pi_0$} \\
\midrule

SFT
& 22.4 & 14.8
& 42.0 & 27.6
& 18.4 & 14.1
& 30.0 & 24.5
& 39.7 & 21.1
& 42.0 & 46.1
& 36.6 & 29.2 \\

Vanilla RFT
& \underline{78.8} & \underline{48.9}
& \underline{70.8} & \underline{53.4}
& \textbf{55.6} & \underline{37.9}
& \underline{66.0} & \underline{46.1}
& \underline{66.2} & \underline{35.5}
& \underline{65.9} & \underline{62.1}
& \underline{66.6} & \underline{48.1} \\

\textbf{DRIVE}
& \textbf{79.6} & \textbf{50.2}
& \textbf{74.4} & \textbf{53.5}
& \underline{54.8} & \textbf{39.8}
& \textbf{66.4} & \textbf{50.8}
& \textbf{68.4} & \textbf{40.3}
& \textbf{70.6} & \textbf{71.4}
& \textbf{69.3} & \textbf{53.4} \\

\midrule
\multicolumn{15}{c}{$\pi_{0.5}$} \\
\midrule

SFT
& 85.6 & 74.8
& 86.4 & 80.9
& 52.8 & 50.7
& 38.4 & 32.6
& 55.3 & 34.2
& 78.7 & 79.7
& 66.6 & 57.9 \\

Vanilla RFT
& 97.6 & 84.5
& 92.8 & 82.8
& 64.4 & \underline{57.9}
& 76.8 & 59.6
& 81.5 & 48.6
& 93.3 & 93.6
& 85.9 & 71.1 \\

Higher Noise
& 93.6 & 84.5
& \textbf{94.0} & \textbf{86.6}
& 60.4 & 56.7
& 74.0 & 58.2
& 81.8 & 45.9
& 93.3 & 93.0
& 85.2 & 70.1 \\

Clip-Higher
& 96.4 & 81.7
& 92.8 & 84.3
& 64.8 & 57.3
& \underline{77.6} & 59.9
& 80.0 & \textbf{52.9}
& 92.9 & \underline{94.0}
& 85.3 & \underline{72.6} \\

KL Reg.
& \underline{98.0} & \underline{86.3}
& 92.0 & 81.2
& \textbf{65.6} & \textbf{59.0}
& 74.4 & \underline{60.1}
& \underline{85.3} & 49.0
& \underline{93.5} & 93.4
& \underline{87.1} & 71.4 \\

\textbf{DRIVE}
& \textbf{98.8} & \textbf{88.1}
& \underline{93.2} & \underline{85.8}
& \underline{65.2} & 56.8
& \textbf{78.8} & \textbf{62.5}
& \textbf{86.8} & \underline{51.9}
& \textbf{93.8} & \textbf{94.2}
& \textbf{88.2} & \textbf{73.1} \\

\bottomrule
\end{tabular}
\end{table*}

Table~\ref{tab:main_results} summarizes the main simulation results across LIBERO-Plus, ManiSkill3, and RoboTwin~2.0.
We compare Vanilla RFT and DRIVE on both $\pi_0$ and $\pi_{0.5}$, with additional comparisons against Higher Noise, KL Regularization, and Clip-Higher on $\pi_{0.5}$.

DRIVE improves upon Vanilla RFT across both backbones and most benchmark splits.
On $\pi_0$, DRIVE improves 11 of the 12 benchmark-split results, including all six OOD results, increasing the benchmark-family macro-average from 66.6\% to 69.3\% under IND conditions and from 48.1\% to 53.4\% under OOD conditions.
The same pattern largely holds for $\pi_{0.5}$, where DRIVE again improves 11 of the 12 results and five of the six OOD results, raising the macro-average from 85.9\% to 88.2\% under IND conditions and from 71.1\% to 73.1\% under OOD conditions.
Although the OOD gain is larger on $\pi_0$ than on $\pi_{0.5}$, the improvements span the four LIBERO-Plus suites, ManiSkill3, and RoboTwin~2.0.
This consistency across two VLA backbones and heterogeneous manipulation settings indicates that the benefit of successful-behavior diversity shaping is not confined to a particular model or task distribution,  but instead reflects a broader effect that persists across different model architectures and evaluation settings.

The broader comparison on $\pi_{0.5}$ shows that alternative exploration and regularization strategies can improve individual benchmarks, but their gains vary across conditions.
Higher Noise, KL Regularization, and Clip-Higher each outperform Vanilla RFT in selected settings, yet none provides consistently stronger performance across the evaluated benchmarks, suggesting that these alternatives offer condition-specific gains rather than a uniform advantage across evaluation conditions.
DRIVE achieves the highest benchmark-family macro-average under both IND (88.2\%) and OOD (73.1\%) evaluation while remaining competitive across individual benchmarks.
Overall, these results show that explicitly shaping successful-behavior diversity provides broad performance gains across heterogeneous manipulation settings and improves generalization under distribution shift.

\subsection{Generalization and Behavioral Dynamics}
\label{sec:analysis}

We further examine how DRIVE's generalization evolves during RFT and whether its performance gains are accompanied by changes in behavioral diversity.
We first track IND and OOD performance across training checkpoints, then compare trajectory-level behavioral similarity over all rollouts and successful rollouts, and finally isolate the role of success-conditioned diversity shaping.

\begin{figure*}[t]
    \centering
    \includegraphics[width=\textwidth]{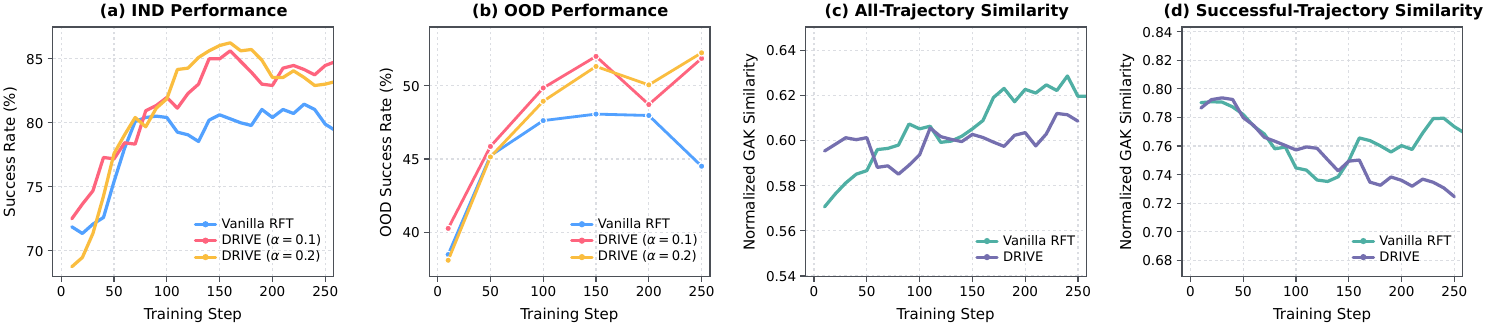}
    \caption{
    Generalization and behavioral dynamics during RFT for $\pi_{0.5}$ on ManiSkill3.
    (a) IND success rate across training steps.
    (b) Average OOD success rate across visual-language, semantic, and execution shifts at selected training steps for Vanilla RFT and DRIVE with two shaping strengths.
    (c) Normalized GAK similarity over all trajectories for Vanilla RFT and DRIVE.
    (d) Normalized GAK similarity over successful trajectories for Vanilla RFT and DRIVE.
    Behavioral analyses in (c--d) use DRIVE with $\alpha=0.1$.
    Lower similarity indicates greater behavioral diversity.
    }
    \label{fig:generalization_diversity}
\end{figure*}

\paragraph{Performance evolution during RFT.}
Figure~\ref{fig:generalization_diversity}(a--b) shows the evolution of IND performance throughout RFT and OOD performance at selected evaluation steps for $\pi_{0.5}$ on ManiSkill3, comparing Vanilla RFT with DRIVE under two shaping strengths, \(\alpha=0.1\) and \(\alpha=0.2\).
Under IND evaluation, both DRIVE variants improve rapidly and achieve stronger performance than Vanilla RFT over substantial portions of training.
Under OOD evaluation, performance is less monotonic and fluctuates more across checkpoints, yet both DRIVE variants achieve higher success rates than Vanilla RFT at multiple evaluated checkpoints.
The two shaping strengths exhibit broadly similar performance trends across both IND and OOD evaluation, showing that the observed improvement is consistent across the tested values of \(\alpha\).
The non-monotonic OOD trajectories further indicate that continued optimization on the RFT distribution does not necessarily translate into steadily improving generalization. This discrepancy also suggests that IND performance alone provides an incomplete characterization of how the policy generalizes during RFT.
Evaluating both distributions throughout training is therefore important for distinguishing improvements on the fine-tuning distribution from changes in robustness to distribution shift.

\begingroup
\hyphenpenalty=5000
\tolerance=1000
\emergencystretch=1em

\begin{wrapfigure}{r}{0.45\textwidth}
    \centering
    \vspace{-6pt}
    \includegraphics[width=\linewidth]{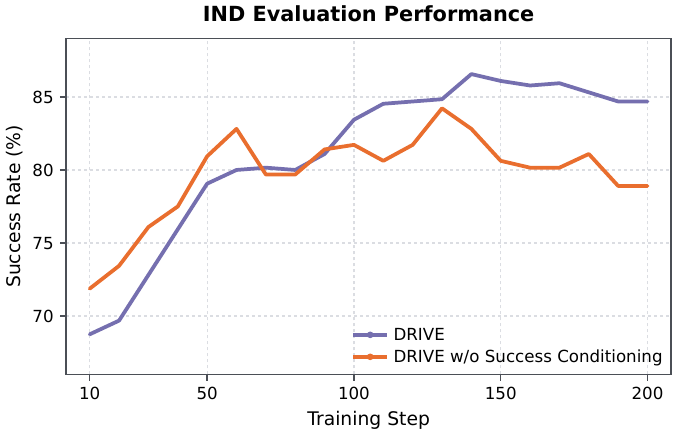}
    \caption{Effect of success conditioning on IND performance. IND evaluation success rates of DRIVE with and without success conditioning throughout RFT.}
    \label{fig:success_conditioning}
    \vspace{-8pt}
\end{wrapfigure}

\noindent\textbf{Successful-behavior diversity.}
We next examine whether the performance differences are accompanied by changes in behavioral diversity.
Using the normalized GAK similarity introduced in Sec.~\ref{sec:trajectory_similarity}, Fig.~\ref{fig:generalization_diversity}(c) compares all rollout trajectories, while Fig.~\ref{fig:generalization_diversity}(d) considers only successful trajectories.
Across all rollouts, Vanilla RFT and DRIVE exhibit broadly similar similarity trends, with only a moderate separation emerging later in training.
The difference becomes substantially clearer among successful trajectories, where DRIVE maintains lower similarity during the later stages of RFT, indicating greater diversity among task-successful behaviors.
This contrast shows that the behavioral effect of DRIVE is concentrated within successful trajectories rather than expressed as uniformly broader variation across the entire rollout distribution.
Consequently, diversity measured over all rollouts can obscure changes that occur specifically within the task-valid solution space.
Together with the stronger OOD performance in Fig.~\ref{fig:generalization_diversity}(b), these results suggest that DRIVE's generalization gains are associated with greater diversity among successful behaviors, consistent with the motivation behind success-conditioned diversity shaping.

\endgroup

\noindent\textbf{Isolating the role of success conditioning.}
We finally isolate the role of success conditioning by comparing full DRIVE with a variant that applies the same diversity shaping without restricting the reward to successful trajectories.
As shown in Fig.~\ref{fig:success_conditioning}, the two variants exhibit comparable IND performance during the early stages of RFT, with no consistent advantage for either method.
As training progresses, however, full DRIVE achieves higher success rates and the separation becomes clearer in the later stages.
The delayed emergence of this gap suggests that success conditioning does not simply strengthen the optimization signal uniformly throughout training.
Rather, the later separation suggests that restricting the diversity bonus to task-valid trajectories becomes more important as RFT progresses, avoiding explicit rewards for behavioral variation that does not lead to task success.
These results indicate that diversity shaping is more effective when conditioned on task success than when applied indiscriminately across all trajectories.

\subsection{Real-World Experiments}
\label{sec:real_world}

Finally, we evaluate whether the generalization gains of DRIVE persist under physical deployment.
We conduct experiments on a dual-arm platform equipped with two AgileX PiPER-X manipulators.
To reduce the simulation-to-real discrepancy, we perform hand--eye and robot-coordinate calibration, reproduce the relative arm placement and manipulation workspace in simulation, align the camera viewpoints, and adjust the scene geometry and illumination to better match the physical environment.
Additional details of the physical setup, simulation alignment, evaluation protocol, and representative rollout sequences are provided in Appendix~\ref{app:real_world}.

We evaluate two manipulation tasks, \emph{Click Bell} and \emph{Press Stapler}.
Since the physical robot configuration differs from the original benchmark setup, we first collect expert trajectories in the aligned simulation environment to obtain a common SFT initialization, from which Vanilla RFT and DRIVE are trained separately.
For real-world evaluation, each policy is tested under a clean condition and three distribution shifts involving distractor objects, tabletop-background changes, and lighting variations.
Each task--condition pair is evaluated over 20 trials.
Figure~\ref{fig:real_world_scenes} shows representative evaluation conditions, while Table~\ref{tab:real_world_results} reports the corresponding success rates.

\begin{figure}[!t]
    \centering
    \begin{minipage}[t]{0.38\textwidth}
        \vspace{0pt}
        \centering
        \begin{minipage}[c][3.6cm][c]{\linewidth}
            \centering
            \includegraphics[width=\linewidth]{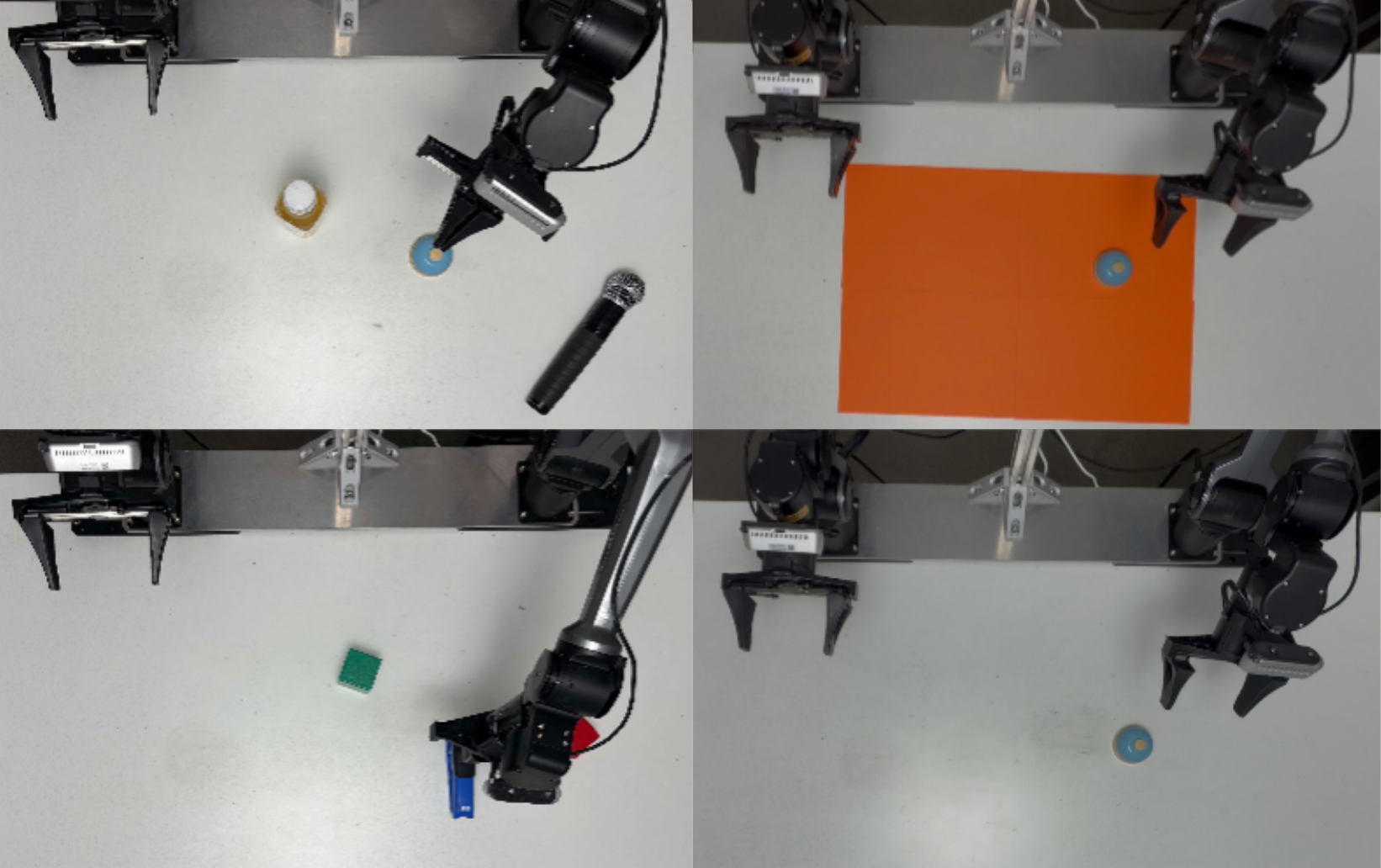}
        \end{minipage}
        \captionof{figure}{
        Representative real-world evaluation conditions: Clean, Distractor, Background, and Lighting.
        }
        \label{fig:real_world_scenes}
    \end{minipage}
    \hfill
    \begin{minipage}[t]{0.58\textwidth}
        \vspace{0pt}
        \centering
        \begin{minipage}[c][3.6cm][c]{\linewidth}
            \centering
            \scriptsize
            \setlength{\tabcolsep}{1.8pt}
            \renewcommand{\arraystretch}{1.15}
            \begin{tabular}{llccccc}
                \toprule
                Task & Method & Clean & Distractor & Background & Lighting & OOD Avg. \\
                \midrule
                \multirow{2}{*}{Click Bell}
                & Vanilla RFT & 80 & 60 & 70 & \textbf{65} & 65.0 \\
                & \textbf{DRIVE} & \textbf{85} & \textbf{75} & \textbf{75} & \textbf{65} & \textbf{71.7} \\
                \midrule
                \multirow{2}{*}{Press Stapler}
                & Vanilla RFT & 75 & 70 & 65 & 55 & 63.3 \\
                & \textbf{DRIVE} & \textbf{80} & \textbf{75} & \textbf{80} & \textbf{70} & \textbf{75.0} \\
                \bottomrule
            \end{tabular}
        \end{minipage}
        \captionof{table}{
        Real-world success rates (\%) over 20 trials per task-condition pair.
        OOD Avg. denotes the mean over Distractor, Background, and Lighting.
        }
        \label{tab:real_world_results}
    \end{minipage}
\end{figure}

DRIVE consistently improves performance under physical deployment.
Across the eight task--condition combinations, DRIVE achieves higher success rates in seven settings and matches Vanilla RFT in the remaining one.
Averaged across the two tasks, the clean-condition success rate increases from 77.5\% to 82.5\%.
The improvement is larger under distribution shift, where the average OOD success rate increases from 64.1\% to 73.3\%, corresponding to a 9.2 percentage-point gain over Vanilla RFT.
For Click Bell, the largest improvement occurs under Distractor ($+15$ points), while for Press Stapler the largest gains occur under Background and Lighting (both $+15$ points).
In comparison, DRIVE improves the clean-condition success rate by $5$ points on both tasks, suggesting that its advantage is more pronounced under distribution shift.
These results are consistent with the simulation experiments and further show that the generalization gains of DRIVE persist when transferred from simulation to the physical robotic system.

\section{Conclusions, Limitations, and Future Work}

We show that VLA RFT contracts behavior globally while expanding the accessibility and diversity of task-valid successful modes. Motivated by this finding, DRIVE uses potential-based, success-conditioned diversity shaping to encourage broader solution coverage without rewarding diverse failures. Across three simulation benchmarks, DRIVE improves the OOD macro-average over Vanilla RFT by 5.3 points on \(\pi_0\) and 2.0 points on \(\pi_{0.5}\), with a further 9.2-point gain on a dual-arm AgileX PiPER-X platform. These results establish successful-mode coverage as both a useful lens for understanding RFT and a practical objective for improving VLA generalization.

In DRIVE, pairwise GAK comparisons within matched rollout groups introduce additional computation and rely on VLM representations to capture meaningful behavioral differences. Future work could develop more efficient, learned diversity metrics that better distinguish task-solving strategies from incidental motion. Moreover, extending DRIVE to longer-horizon tasks, broader embodiments, and online real-world fine-tuning would further assess its scalability and generality.

\bibliography{iclr2027_conference}
\bibliographystyle{iclr2027_conference}

\clearpage

\appendix

\clearpage
\addtocontents{toc}{\protect\setcounter{tocdepth}{2}}

\renewcommand{\contentsname}{\LARGE Appendix}
\tableofcontents

\clearpage

\section{Optimal Policy Invariance of DRIVE}
\label{app:policy_invariance}

Following~\citet{ng1999policy}, we provide the proof of Theorem~\ref{theo:shaping}, showing that the success-conditioned diversity signal preserves optimal policies when introduced through potential-based reward shaping.

\begin{proof}[Proof of Theorem~\ref{theo:shaping}]
For a fixed rollout group, define the success-conditioned diversity potential
\[
\Phi(s)
\triangleq
y\hat d(s),
\]
where \(y\in\{0,1\}\) is the binary success indicator and
\(\hat d(s)\) is the normalized trajectory-level diversity. Since
\(\hat d(s)\in[0,1]\), the potential \(\Phi(s)\) is bounded. The intrinsic reward in Theorem~\ref{theo:shaping} can then be written as
\[
r^{\mathrm{div}}(s,a,s')
=
\gamma\Phi(s')-\Phi(s)
=
\gamma y\hat d(s')-y\hat d(s).
\]

Let \(Q_M^*(s,a)\) denote the optimal action-value function of the original MDP
\(M=(S,A,P,R,\gamma)\). It satisfies the Bellman optimality equation
\[
Q_M^*(s,a)
=
\mathbb{E}_{s'\sim P(\cdot\mid s,a)}
\left[
r(s,a,s')
+
\gamma\max_{a'\in A}Q_M^*(s',a')
\right].
\]
Subtracting \(\alpha\Phi(s)\) from both sides and adding and subtracting
\(\alpha\gamma\Phi(s')\) inside the expectation gives
\[
Q_M^*(s,a)-\alpha\Phi(s)=
\mathbb{E}_{s'}
\Big[r(s,a,s')+\alpha\bigl(\gamma\Phi(s')-\Phi(s)\bigr)
+\gamma\max_{a'\in A}
\bigl(Q_M^*(s',a')-\alpha\Phi(s')\bigr)\Big].
\]
The equality holds because \(\Phi(s')\) depends only on the successor state and is therefore independent of the maximization over \(a'\).

Define
\(
\widetilde Q(s,a)
\triangleq
Q_M^*(s,a)-\alpha\Phi(s).
\)
Using
\(
r^{\mathrm{div}}(s,a,s')
=
\gamma\Phi(s')-\Phi(s),
\)
we obtain
\[
\widetilde Q(s,a)
=
\mathbb{E}_{s'\sim P(\cdot\mid s,a)}
\left[
r(s,a,s')
+\alpha r^{\mathrm{div}}(s,a,s')
+\gamma\max_{a'\in A}\widetilde Q(s',a')
\right].
\]
This is precisely the Bellman optimality equation for the transformed MDP
\[
M'
=
(S,A,P,R+\alpha R^{\mathrm{div}},\gamma).
\]
By the uniqueness of the Bellman optimality fixed point,
\[
Q_{M'}^*(s,a)
=
\widetilde Q(s,a)
=
Q_M^*(s,a)-\alpha\Phi(s).
\]

Because the subtracted term \(\alpha\Phi(s)\) depends only on the state and not on the action,
\[
\begin{aligned}
\arg\max_{a\in A}Q_{M'}^*(s,a)
&=
\arg\max_{a\in A}
\left[
Q_M^*(s,a)-\alpha\Phi(s)
\right]\\
&=
\arg\max_{a\in A}Q_M^*(s,a).
\end{aligned}
\]
Therefore, the sets of optimal actions are identical at every state. Every optimal policy in \(M'\) is consequently optimal in \(M\), and every optimal policy in \(M\) is also optimal in \(M'\). Hence, DRIVE's success-conditioned potential-based shaping preserves the set of optimal policies.
\end{proof}

\section{Trajectory Representation and Global Alignment Similarity}
\label{app:gak_details}

This section provides additional details on the trajectory representation and temporally aligned similarity used in Sec.~\ref{sec:trajectory_similarity}.

\paragraph{Trajectory representation.}
For each policy decision, we extract a feature \(z_{i,t}\) from the VLM prefix representation already computed by the policy, using mean pooling over valid prefix tokens. If trajectory \(\tau_i\) contains \(L_i\) action chunks, we additionally extract a feature from the terminal observation and represent the trajectory as
\[
Z_i=
\left(
z_{i,0},\ldots,z_{i,L_i-1},z_{i,\mathrm{term}}
\right),
\qquad
M_i\coloneqq |Z_i|=L_i+1.
\]
Thus, each trajectory is retained as an ordered variable-length feature sequence rather than collapsed into a single embedding.

\paragraph{Local similarity and temporal alignment.}
For two features, we use a shifted cosine similarity
\[
k_{ij}(t,u)
=
\operatorname{clip}
\left(
\frac{
\widehat z_{i,t}^{\top}\widehat z_{j,u}+1
}{2},
\epsilon,
1
\right),
\]
where \(\widehat z\) denotes the \(\ell_2\)-normalized feature and \(\epsilon=10^{-12}\).

To accommodate different execution speeds and trajectory lengths, we compute the Global Alignment Kernel (GAK)~\cite{cuturi2011gak}. Let \(G_{ij}(a,b)\) denote the alignment kernel between the first \(a\) features of \(Z_i\) and the first \(b\) features of \(Z_j\). With \(G_{ij}(0,0)=1\) and the remaining boundary entries set to zero,
\[
G_{ij}(a,b)
=
k_{ij}(a-1,b-1)
\left[
G_{ij}(a-1,b)
+
G_{ij}(a,b-1)
+
G_{ij}(a-1,b-1)
\right].
\]
The three transitions allow either trajectory, or both, to advance while preserving temporal order. Unlike a single best-path alignment, this recurrence aggregates contributions from all admissible monotonic alignment paths.

\paragraph{Normalized similarity.}
The unnormalized trajectory kernel is
\[
K_{\mathrm{GAK}}(Z_i,Z_j)
=
G_{ij}(|Z_i|,|Z_j|).
\]
and we use the normalized similarity
\[
\widetilde K_{ij}
=
\frac{
K_{\mathrm{GAK}}(Z_i,Z_j)
}{
\sqrt{
K_{\mathrm{GAK}}(Z_i,Z_i)
K_{\mathrm{GAK}}(Z_j,Z_j)
}
}.
\]
The implementation evaluates the dynamic program in log space for numerical stability. The resulting \(\widetilde K_{ij}\in[0,1]\) is used as the trajectory-level behavioral similarity throughout the paper.

\section{Benchmark and Evaluation Details}
\label{app:benchmark_details}

\subsection{LIBERO-Plus}
\label{app:libero_plus}

We evaluate all four standard LIBERO suites---Spatial, Object, Goal, and
Long---using the task variations provided by LIBERO-Plus
\citep{fei2025liberoplus,liu2023libero}.
Our split preserves all ten original base tasks in each suite and separates
IND and OOD conditions through the variation values applied to these tasks,
rather than by holding out base-task identities.
This keeps the underlying manipulation semantics fixed while testing
generalization to changes in scene configuration, visual appearance, language,
and robot/environment state.

\paragraph{IND construction.}
For each suite, the RFT/IND pool contains all ten clean LIBERO tasks together
with 240 selected LIBERO-Plus variants, yielding 250 task definitions per
suite.
We construct the IND variation pool from four perturbation families that can
be partitioned by discrete configurations or identities:
\emph{Confounding Objects}, \emph{Target Object Pose},
\emph{Scene Theme}, and \emph{Surface Appearance}.
The selected variants cover all ten base tasks and remain fixed throughout
training.
Table~\ref{tab:libero_ind_composition} summarizes the resulting composition.

For Spatial, Object, and Goal, the selected variants are distributed across
the ten base tasks to maintain broad task coverage while exposing the policy
to multiple values from each IND perturbation family.
For Long, which contains longer-horizon and more fragile manipulation
sequences, we retain the same overall pool size but use a more conservative
allocation of layout perturbations and a larger proportion of background
variations.
The Spatial, Object, and Goal splits sample uniformly from their complete
sets of selected task definitions.
For LIBERO Long, however, the number of available definitions differs across
base tasks, so we use base-task-uniform hierarchical sampling.
Specifically, we first sample one of the ten base tasks uniformly, then sample
uniformly from the selected clean and Plus task definitions associated with
that base task, and finally sample a valid initial-state index for the selected
definition.
This keeps the marginal sampling frequency of the Long base tasks balanced
despite differences in the number of available Plus variants.

\paragraph{OOD construction.}
OOD evaluation uses perturbation values that are excluded from the IND/RFT
pool.
For object-layout perturbations, we separate distractor identities and
target-object configurations between IND and OOD.
Similarly, scene themes and surface appearances are partitioned by their
underlying identities or configurations to avoid overlap between the two
splits.
The remaining LIBERO-Plus dimensions---camera viewpoints, robot initial
states, language instructions, lighting conditions, and sensor
noise---provide additional held-out distribution shifts for OOD evaluation.
Thus, IND and OOD retain the same underlying manipulation tasks while differing
in the perturbation values that define the corresponding observation and
execution conditions.

LIBERO-Plus organizes its variations into seven major perturbation families:
\emph{Objects Layout}, \emph{Camera Viewpoints},
\emph{Robot Initial States}, \emph{Language Instructions},
\emph{Light Conditions}, \emph{Background Textures}, and
\emph{Sensor Noise}.
Across the four suites, the complete LIBERO-Plus manifests contain
2,402 Spatial, 2,518 Object, 2,591 Goal, and 2,519 Long variants.
Table~\ref{tab:libero_plus_statistics} summarizes their distribution across
perturbation families.

\begin{table*}[h]
    \centering
    \small
    \setlength{\tabcolsep}{7pt}
    \renewcommand{\arraystretch}{1.12}
    \caption{
    Composition of the LIBERO-Plus IND task pools.
    Each suite contains the ten original clean LIBERO tasks and 240 selected
    Plus variants.
    }
    \label{tab:libero_ind_composition}
    \begin{tabular}{lrrrrrr}
        \toprule
        Suite
        & Clean
        & Confounding
        & Target Pose
        & Scene Theme
        & Surface
        & Total \\
        \midrule
        Spatial
        & 10 & 78 & 66 & 44 & 52 & 250 \\
        Object
        & 10 & 77 & 72 & 37 & 54 & 250 \\
        Goal
        & 10 & 81 & 63 & 42 & 54 & 250 \\
        Long
        & 10 & 65 & 50 & 52 & 73 & 250 \\
        \bottomrule
    \end{tabular}
\end{table*}

\begin{table*}[h]
    \centering
    \small
    \setlength{\tabcolsep}{8pt}
    \renewcommand{\arraystretch}{1.12}
    \caption{
    LIBERO-Plus perturbation statistics for the four evaluated suites.
    Counts denote variants in the complete Plus manifests and are distinct
    from the selected IND subsets in
    Table~\ref{tab:libero_ind_composition}.
    }
    \label{tab:libero_plus_statistics}
    \begin{tabular}{lrrrr}
        \toprule
        Perturbation family
        & Spatial
        & Object
        & Goal
        & Long \\
        \midrule
        Objects Layout
        & 385 & 403 & 425 & 312 \\
        Camera Viewpoints
        & 376 & 396 & 408 & 419 \\
        Robot Initial States
        & 350 & 398 & 409 & 393 \\
        Language Instructions
        & 390 & 354 & 410 & 383 \\
        Light Conditions
        & 292 & 297 & 279 & 274 \\
        Background Textures
        & 258 & 248 & 281 & 289 \\
        Sensor Noise
        & 351 & 422 & 379 & 449 \\
        \midrule
        \textbf{Total}
        & \textbf{2,402}
        & \textbf{2,518}
        & \textbf{2,591}
        & \textbf{2,519} \\
        \bottomrule
    \end{tabular}
\end{table*}

\paragraph{Evaluation protocol.}
We evaluate each task definition using exactly one fixed initial state.
Specifically, we always select the first available initialization associated
with that task definition.
Original LIBERO tasks may provide multiple initialization states
(e.g., 50 initializations), whereas many LIBERO-Plus variants provide only a
single state; in both cases, only the first initialization is used for
evaluation.
Consequently, every evaluated task definition contributes exactly one episode
to the reported metric, independent of the number of initialization states
provided by the benchmark.
The same fixed task--initialization pairs are used for all compared methods
and VLA backbones.
Performance is reported as the mean binary task success rate over the
corresponding IND or OOD evaluation set.

\FloatBarrier
\subsection{ManiSkill3}
\label{app:maniskill}

We evaluate ManiSkill3~\citep{taomaniskill3} using the
\texttt{PutOnPlateInScene25Main-v3} pick-and-place environment, following the
generalization setting of RL4VLA~\citep{liu2025rlvla}.
Unlike LIBERO-Plus, which contains multiple semantic base tasks, the ManiSkill3
evaluation uses a common manipulation objective and constructs distribution
shifts by modifying the visual observation, semantic grounding requirements,
or execution conditions.
The IND environment contains 16 training objects, 16 training scenes, and
randomized object and plate initial states.

\paragraph{OOD construction.}
We organize the held-out evaluation conditions into three groups:
\emph{Visual-Language OOD}, \emph{Semantic OOD}, and
\emph{Execution OOD}.
Visual-Language OOD changes the appearance of the visual input or the language
expression while preserving the underlying manipulation objective.
The six settings consist of an instruction variation,
\texttt{Instruct-v1}; an unseen background appearance,
\texttt{VisionImage-v1}; and four texture perturbations,
\texttt{VisionTexture03-v1}, \texttt{VisionTexture05-v1},
\texttt{VisionWhole03-v1}, and \texttt{VisionWhole05-v1}.
The texture variants apply either local foreground or global image-level
perturbations at two strengths.

Semantic OOD modifies the grounding problem while retaining the same basic
pick-and-place skill.
We consider four settings:
\texttt{MultiCarrot-v1 (train)} and
\texttt{MultiCarrot-v1 (test)}, which introduce multiple candidate objects,
and \texttt{MultiPlate-v1 (train)} and
\texttt{MultiPlate-v1 (test)}, which introduce multiple candidate
receptacles.
The train variants use identities from the training object/receptacle sets,
whereas the test variants introduce held-out identities.
These settings therefore test whether the policy can resolve the target object
or receptacle from language in scenes containing semantically competing
alternatives.

Execution OOD changes the state distribution or environment during task
execution rather than primarily modifying appearance or semantic content.
The three settings cover shifts in initial object/plate configurations and
dynamic changes during an episode.
For example, \texttt{Position-v1} changes the initial spatial distribution,
while \texttt{PositionChangeTo-v1} changes the target-object position during
execution, requiring the policy to react to a modified environment.
Together, these cases evaluate spatial-state generalization and closed-loop
recovery under execution-time distribution shifts.

\FloatBarrier
\begin{table*}[h]
    \centering
    \small
    \setlength{\tabcolsep}{8pt}
    \renewcommand{\arraystretch}{1.12}
    \caption{
    ManiSkill3 evaluation protocol.
    Each setting is evaluated over 320 episodes.
    The OOD benchmark contains 13 settings grouped into visual-language,
    semantic, and execution shifts.
    }
    \label{tab:maniskill_eval_protocol}
    \begin{tabular}{llrr}
        \toprule
        Split / Category
        & Evaluation focus
        & Settings
        & Episodes \\
        \midrule
        IND
        & Training objects, scenes, and state distribution
        & 1 & 320 \\
        \midrule
        Visual-Language OOD
        & Visual appearance and language variation
        & 6 & 1,920 \\
        Semantic OOD
        & Object/receptacle grounding under semantic ambiguity
        & 4 & 1,280 \\
        Execution OOD
        & Initial-state shifts and execution-time changes
        & 3 & 960 \\
        \midrule
        \textbf{Total}
        & \textbf{IND + three OOD categories}
        & \textbf{14}
        & \textbf{4,480} \\
        \bottomrule
    \end{tabular}
\end{table*}

\paragraph{Evaluation protocol.}
Each evaluation setting is run for 320 episodes using a fixed evaluation
configuration shared by all compared methods and backbones.
This yields 1,920 episodes for Visual-Language OOD, 1,280 for Semantic OOD,
and 960 for Execution OOD, for a total of 4,160 OOD episodes in addition to
the 320 IND episodes.
Success is determined by the environment's binary task-completion criterion.
We first average success over the episodes belonging to each evaluation
setting and then aggregate the settings within their corresponding OOD
category.
The resulting IND and OOD success rates are used for the comparisons reported
in the main paper.

\FloatBarrier
\subsection{RoboTwin 2.0}
\label{app:robotwin}

We evaluate RoboTwin~2.0~\citep{chen2025robotwin} on two dual-arm
manipulation tasks, \emph{Click Bell} and \emph{Press Stapler}.
To evaluate generalization beyond the nominal task distribution, we construct
task-specific IND and OOD initialization sets following the structured
perturbation design of RoboTwin~2.0-Plus~\citep{zhang2026worldaction}.
The underlying task objective remains unchanged across the two splits; the
distribution shift is induced through controlled changes to objects, scene
appearance, camera and lighting conditions, robot state, language, and visual
observations.

\paragraph{IND/OOD construction.}
For each manipulation task, we construct a set of 1,500 evaluation
configurations.
The first 1,000 configurations, indexed by $[0,1000)$, define the IND set,
while the remaining 500, indexed by $[1000,1500)$, define the OOD set.
The IND set contains clean configurations together with variations in object
configuration, background appearance, and robot initial state.
The OOD set introduces held-out values from these dimensions and additionally
includes shifts in lighting, camera viewpoint, language instruction, and
sensor observations.
Table~\ref{tab:robotwin_split} summarizes the composition used for each task.

\FloatBarrier
\begin{table*}[!t]
    \centering
    \small
    \setlength{\tabcolsep}{9pt}
    \renewcommand{\arraystretch}{1.12}
    \caption{
    RoboTwin~2.0 IND/OOD construction for each evaluated task.
    The same split structure is independently instantiated for
    \emph{Click Bell} and \emph{Press Stapler}.
    }
    \label{tab:robotwin_split}
    \begin{tabular}{lrrl}
        \toprule
        Perturbation family
        & IND
        & OOD
        & Description \\
        \midrule
        Clean
        & 50 & 0
        & Nominal task configurations \\
        Objects
        & 134 & 34
        & Object additions and pose variations \\
        Background
        & 750 & 150
        & Scene/background appearance variations \\
        Lighting
        & 0 & 50
        & Held-out illumination conditions \\
        Camera
        & 0 & 50
        & Held-out camera configurations \\
        Robot State
        & 66 & 16
        & Robot initial-state variations \\
        Language
        & 0 & 50
        & Held-out instruction variations \\
        Sensor
        & 0 & 150
        & Visual/sensor perturbations \\
        \midrule
        \textbf{Total}
        & \textbf{1,000}
        & \textbf{500}
        & \textbf{1,500 initializations per task} \\
        \bottomrule
    \end{tabular}
\end{table*}

\paragraph{Perturbation generation.}
The manifest combines explicitly validated physical-state configurations with
deterministically derived observation-level perturbations.
For each task, 300 configurations are grounded in expert-validated scene
states: 250 belong to IND, consisting of 50 clean configurations,
67 object-addition variants, 67 object-pose variants, and 66 robot-state
variants; the remaining 50 are OOD anchors, consisting of 17 object-addition,
17 object-pose, and 16 robot-state variants.
The other 1,200 configurations are generated deterministically from validated
scene states by modifying background, lighting, camera, language, or sensor
conditions.
This construction allows appearance-level shifts to reuse a physically valid
underlying manipulation scene rather than requiring a separate expert
demonstration for every rendered variation.

The object perturbations contain two primary subtypes:
object additions (\texttt{O1}) and target/object pose changes
(\texttt{O2}).
The remaining perturbations operate primarily on scene appearance,
observation, instruction, or robot initialization.
All perturbations are instantiated as single-factor variations according to
the manifest definition, allowing the source of each distribution shift to be
controlled during evaluation.

\paragraph{Evaluation protocol.}
For each task, the full IND pool contains 1,000 evaluation configurations.
We uniformly sample a fixed subset of 350 configurations from this pool for
evaluation and reuse exactly the same subset across all compared methods and
VLA backbones.
For OOD evaluation, we use the full set of 500 predefined OOD configurations.
The same perturbation assignments are used for all compared methods and
backbones.

Each selected configuration produces one rollout and is scored using the
binary task-success criterion defined by the RoboTwin environment.
We compute success rates separately for \emph{Click Bell} and
\emph{Press Stapler}, and report their macro-average as the overall
RoboTwin~2.0 result in the main table.

\section{Training and Implementation Details}
\label{app:training_details}

This section provides additional details of the RFT training protocol,
rollout collection, DRIVE implementation, and hyperparameters used in our
experiments.
Unless otherwise specified, all methods within the same benchmark--backbone
setting are initialized from the same SFT checkpoint and use the same
training horizon, rollout distribution, and PPO optimization configuration.
Training horizons are selected per benchmark--backbone setting according to
task difficulty and convergence behavior and are held fixed across the
compared methods.

\paragraph{Common RFT setup.}
All experiments use PPO with Flow-SDE sampling and are initialized from the
same SFT checkpoint within each benchmark--backbone setting.
We use $\gamma=0.99$, GAE $\lambda=0.95$, and PPO clip ratio $0.2$ throughout.
Benchmark-specific optimization, rollout, and action-chunk parameters are
reported in Tables~\ref{tab:libero_training_hparams} and
\ref{tab:other_training_hparams}.
Here, $H$ denotes the action prediction horizon and $H'$ the number of actions
executed before replanning.

\paragraph{Rollout collection.}
We sample 64 groups with 8 stochastic rollouts per group, yielding 512
trajectories per collection.
Rollouts within a group share the same task and environment condition, and
DRIVE computes trajectory similarity only within each group.

\paragraph{DRIVE implementation.}
For each rollout, we construct a temporal feature sequence from the VLM-prefix
representations associated with the executed action chunks.
Valid token features are aggregated using mean pooling, and the terminal
observation is included when constructing the trajectory representation.
Within each rollout group, we compute pairwise normalized Global Alignment
Kernel (GAK) similarity between trajectories as described in
Sec.~\ref{sec:trajectory_similarity}.

The resulting relative diversity signal is used as an intrinsic shaping term.
Importantly, the shaping reward is success-conditioned: only successful
trajectories receive the diversity bonus, preventing behaviorally unusual
failures from being explicitly rewarded.
The shaping term is applied at the terminal action chunk.
The coefficient $\alpha$ controls the strength of this term and is held
constant throughout training; we do not use an $\alpha$ decay schedule in
the reported experiments.
Benchmark-specific values are listed in
Tables~\ref{tab:libero_training_hparams} and
\ref{tab:other_training_hparams}.

\begin{table*}[t]
    \centering
    \scriptsize
    \setlength{\tabcolsep}{3.8pt}
    \renewcommand{\arraystretch}{1.08}
    \caption{
    Training hyperparameters for LIBERO-Plus.
    Runner epochs and DRIVE shaping coefficients correspond to the final
    experimental configurations.
    All compared methods within each column use the same runner horizon.
    }
    \label{tab:libero_training_hparams}
    \begin{tabular}{lcccccccc}
        \toprule
        & \multicolumn{4}{c}{$\pi_0$}
        & \multicolumn{4}{c}{$\pi_{0.5}$} \\
        \cmidrule(lr){2-5}
        \cmidrule(lr){6-9}
        Parameter
        & Spatial & Object & Goal & Long
        & Spatial & Object & Goal & Long \\
        \midrule

        Runner epochs
        & 200 & 300 & 400 & 200
        & 300 & 250 & 100 & 300 \\

        Global batch size
        & 2048 & 2048 & 2048 & 2048
        & 2048 & 2048 & 2048 & 2048 \\

        PPO update epochs
        & 4 & 4 & 4 & 4
        & 3 & 3 & 3 & 4 \\

        Actor learning rate
        & $5{\times}10^{-6}$ & $5{\times}10^{-6}$
        & $5{\times}10^{-6}$ & $5{\times}10^{-6}$
        & $5{\times}10^{-6}$ & $5{\times}10^{-6}$
        & $5{\times}10^{-6}$ & $5{\times}10^{-6}$ \\

        Critic learning rate
        & $1{\times}10^{-4}$ & $1{\times}10^{-4}$
        & $1{\times}10^{-4}$ & $1{\times}10^{-4}$
        & $1{\times}10^{-4}$ & $1{\times}10^{-4}$
        & $1{\times}10^{-4}$ & $1{\times}10^{-4}$ \\

        LR schedule
        & Const. & Const. & Const. & Const.
        & Const. & Const. & Const. & Cosine \\

        \midrule

        Reward discount $\gamma$
        & 0.99 & 0.99 & 0.99 & 0.99
        & 0.99 & 0.99 & 0.99 & 0.99 \\

        GAE $\lambda$
        & 0.95 & 0.95 & 0.95 & 0.95
        & 0.95 & 0.95 & 0.95 & 0.95 \\

        PPO clip $\epsilon$
        & 0.20 & 0.20 & 0.20 & 0.20
        & 0.20 & 0.20 & 0.20 & 0.20 \\

        \midrule

        Episode horizon
        & 240 & 240 & 320 & 480
        & 240 & 240 & 320 & 480 \\

        Sampling groups
        & 64 & 64 & 64 & 64
        & 64 & 64 & 64 & 64 \\

        Rollouts / group
        & 8 & 8 & 8 & 8
        & 8 & 8 & 8 & 8 \\

        Trajectories / collection
        & 512 & 512 & 512 & 512
        & 512 & 512 & 512 & 512 \\

        \midrule

        Prediction horizon $H$
        & 50 & 50 & 50 & 50
        & 10 & 10 & 10 & 10 \\

        Execution horizon $H'$
        & 10 & 10 & 10 & 10
        & 10 & 10 & 10 & 10 \\

        Flow-SDE denoise steps
        & 4 & 4 & 4 & 4
        & 3 & 5 & 5 & 5 \\

        Flow-SDE noise $\sigma$
        & 0.5 & 0.5 & 0.5 & 0.5
        & 0.5 & 0.3 & 0.3 & 0.5 \\

        \midrule

        DRIVE $\alpha$
        & 0.10 & 0.10 & 0.05 & 0.05
        & 0.10 & 0.10 & 0.10 & 0.10 \\

        \bottomrule
    \end{tabular}
\end{table*}

\begin{table*}[t]
    \centering
    \scriptsize
    \setlength{\tabcolsep}{4.0pt}
    \renewcommand{\arraystretch}{1.08}
    \caption{
    Training hyperparameters for ManiSkill3 and RoboTwin~2.0.
    Training horizons are shared by all compared methods within each
    benchmark--backbone setting.
    }
    \label{tab:other_training_hparams}
    \begin{tabular}{lcccccc}
        \toprule
        &
        \multicolumn{2}{c}{ManiSkill3}
        &
        \multicolumn{2}{c}{Click Bell}
        &
        \multicolumn{2}{c}{Press Stapler} \\
        \cmidrule(lr){2-3}
        \cmidrule(lr){4-5}
        \cmidrule(lr){6-7}
        Parameter
        & $\pi_0$ & $\pi_{0.5}$
        & $\pi_0$ & $\pi_{0.5}$
        & $\pi_0$ & $\pi_{0.5}$ \\
        \midrule

        Runner epochs
        & 300 & 200
        & 100 & 100
        & 100 & 50 \\

        Global batch size
        & 5120 & 5120
        & 2048 & 2048
        & 2048 & 2048 \\

        PPO update epochs
        & 4 & 5
        & 5 & 5
        & 5 & 5 \\

        Actor learning rate
        & $7.91{\times}10^{-6}$ & $7.91{\times}10^{-6}$
        & $5.6{\times}10^{-6}$ & $5{\times}10^{-6}$
        & $5.6{\times}10^{-6}$ & $5{\times}10^{-6}$ \\

        Critic learning rate
        & $1.55{\times}10^{-4}$ & $1.55{\times}10^{-4}$
        & $1.1{\times}10^{-4}$ & $1{\times}10^{-4}$
        & $1.1{\times}10^{-4}$ & $1{\times}10^{-4}$ \\

        LR schedule
        & Const. & Const.
        & Const. & Const.
        & Const. & Const. \\

        \midrule

        Reward discount $\gamma$
        & 0.99 & 0.99
        & 0.99 & 0.99
        & 0.99 & 0.99 \\

        GAE $\lambda$
        & 0.95 & 0.95
        & 0.95 & 0.95
        & 0.95 & 0.95 \\

        PPO clip $\epsilon$
        & 0.20 & 0.20
        & 0.20 & 0.20
        & 0.20 & 0.20 \\

        \midrule

        Episode horizon
        & 80 & 80
        & 200 & 200
        & 200 & 200 \\

        Sampling groups
        & 64 & 64
        & 64 & 64
        & 64 & 64 \\

        Rollouts / group
        & 8 & 8
        & 8 & 8
        & 8 & 8 \\

        Trajectories / collection
        & 512 & 512
        & 512 & 512
        & 512 & 512 \\

        \midrule

        Prediction horizon $H$
        & 8 & 8
        & 50 & 50
        & 50 & 50 \\

        Execution horizon $H'$
        & 5 & 5
        & 20 & 20
        & 20 & 20 \\

        Flow-SDE denoise steps
        & 4 & 4
        & 4 & 5
        & 4 & 5 \\

        Flow-SDE noise $\sigma$
        & 0.5 & 0.5
        & 0.5 & 0.5
        & 0.5 & 0.5 \\

        \midrule

        DRIVE $\alpha$
        & 0.10 & 0.10
        & 0.02 & 0.05
        & 0.05 & 0.05 \\

        \bottomrule
    \end{tabular}
\end{table*}

\paragraph{Baseline implementations.}
The auxiliary baselines use the same underlying RFT pipeline and modify only
the component associated with the corresponding exploration or regularization
strategy.
Vanilla RFT uses the standard Flow-SDE sampling configuration, PPO clipping,
and environment reward without additional diversity shaping.
Higher Noise increases the Flow-SDE sampling noise to $\sigma=0.7$.
KL Regularization adds a KL penalty toward the SFT reference policy with
coefficient $\beta=0.5$.
Clip-Higher keeps the lower PPO clipping bound at $0.20$ while increasing the
upper bound from $0.20$ to $0.28$.
All other benchmark-specific settings follow the corresponding Vanilla RFT
configuration.

\begin{table}[t]
    \centering
    \small
    \setlength{\tabcolsep}{5pt}
    \renewcommand{\arraystretch}{1.08}
    \caption{
    Method-specific modifications relative to Vanilla RFT.
    }
    \label{tab:baseline_training_details}
    \begin{tabular}{ll}
        \toprule
        Method & Modification \\
        \midrule
        Vanilla RFT
        & Standard PPO + Flow-SDE \\

        Higher Noise
        & Flow-SDE noise $\sigma=0.7$ \\

        KL Regularization
        & KL penalty $\beta=0.5$ \\

        Clip-Higher
        & PPO upper clip $0.20 \rightarrow 0.28$ \\

        DRIVE
        & Success-conditioned GAK shaping \\
        \bottomrule
    \end{tabular}
\end{table}

\section{Real-World Setup and Evaluation Details}
\label{app:real_world}

This section provides additional details of the physical platform,
simulation-to-real alignment, evaluation protocol, and representative
real-world rollouts used in Sec.~\ref{sec:real_world}.

\paragraph{Physical platform.}
Our real-world experiments are conducted on a dual-arm manipulation platform
equipped with two AgileX PiPER-X manipulators.
The workspace contains the two robot arms, wrist-mounted observation cameras,
a planar manipulation surface, and the task objects used for
\emph{Click Bell} and \emph{Press Stapler}.
Because this hardware configuration differs from the original RoboTwin setup,
we construct a corresponding simulation environment whose robot placement,
workspace geometry, and observation viewpoints are aligned with the physical
system before collecting demonstrations and performing RFT.

\paragraph{Geometric and camera calibration.}
We use a metric planar ChArUco board as the common calibration frame for the
dual-arm workspace.
The left and right manipulators are calibrated independently using the same
procedure and their respective wrist-mounted cameras.
For each arm, the end effector is moved through 48 predefined poses spanning
different positions and orientations in the reachable workspace.
At each pose, the robot state provides the base-to-end-effector transform,
while ChArUco detection followed by \texttt{solvePnP} estimates the
board-to-camera transform from the RGB observations.
Multiple valid estimates collected at the same pose are fused before
calibration.

The resulting multi-pose observations are passed to OpenCV's Shah
robot-world/hand-eye solver, which jointly recovers the pose of each robot
base in the common workspace frame and the rigid transform between the
end effector and its wrist-mounted camera.
For calibration pose $i$, the transformations satisfy

\begin{equation}
    {}^{W}\mathbf{T}_{B}\,
    {}^{B}\mathbf{T}_{E}^{(i)}\,
    {}^{E}\mathbf{T}_{C}
    =
    {}^{W}\mathbf{T}_{C}^{(i)},
    \label{eq:real_world_calibration}
\end{equation}

where $W$, $B$, $E$, and $C$ denote the ChArUco/workspace, robot-base,
end-effector, and wrist-camera coordinate frames, respectively.
Calibrating both arms with respect to the same workspace frame also provides
their relative placement.

\paragraph{Simulation alignment.}
We construct the corresponding RoboTwin setup to match the geometry and
observation configuration of the physical dual-arm system.
The calibrated physical workspace provides a common reference for the
relative robot arrangement, while the RoboTwin embodiment uses the
corresponding PiPER-X asset configuration and inter-arm spacing.
We additionally reproduce the manipulation-plane geometry and relative task
layout to match the real setup.

For camera alignment, the real-camera intrinsics are transferred to the
simulated observation cameras and adjusted to the rendered image resolution.
The wrist-camera extrinsics are not applied a second time from the hand-eye
calibration; instead, they remain defined by the fixed PiPER-X URDF asset used
by the RoboTwin embodiment.
This avoids double-applying the camera-to-robot transform while preserving
consistent wrist-camera viewpoints.
We further match the static observation viewpoint and adjust scene
illumination and visual appearance to reduce the remaining sim-to-real
discrepancy.

\paragraph{Task initialization and training.}
We evaluate two manipulation tasks, \emph{Click Bell} and
\emph{Press Stapler}.
Since the physical robot configuration differs from the original benchmark
configuration, we first collect expert trajectories in the aligned simulation
environment and use them to obtain a common SFT initialization.
Vanilla RFT and DRIVE are then trained independently from the same SFT
checkpoint in the aligned simulation environment.
The resulting policies are evaluated on the physical platform using the same
observation and action interface.

\paragraph{Real-world evaluation protocol.}
For each task, we evaluate one clean condition together with three distribution
shifts: \emph{Distractor}, \emph{Background}, and \emph{Lighting}.
The Distractor condition introduces additional objects into the workspace;
the Background condition changes the tabletop appearance; and the Lighting
condition changes scene illumination while preserving the underlying task.
These conditions probe whether the policy remains robust to visual and
contextual changes beyond the nominal physical setup.

Each task--condition pair is evaluated over 20 trials, and the same evaluation
conditions are used for Vanilla RFT and DRIVE.
A rollout is considered successful when the corresponding physical
manipulation objective is completed.
The quantitative results are reported in
Table~\ref{tab:real_world_results}.

\paragraph{Representative physical rollouts.}
Figure~\ref{fig:real_world_rollouts} provides qualitative examples of
successful physical executions under the four evaluation conditions.
For clarity, we visualize \emph{Click Bell}, for which the task objective
remains identical while the surrounding visual context changes.
Each row contains five temporally ordered frames sampled from a single rollout,
showing the progression from the initial scene to successful interaction with
the target.

\begin{figure*}[!t]
    \centering

    \includegraphics[width=\linewidth,keepaspectratio]
    {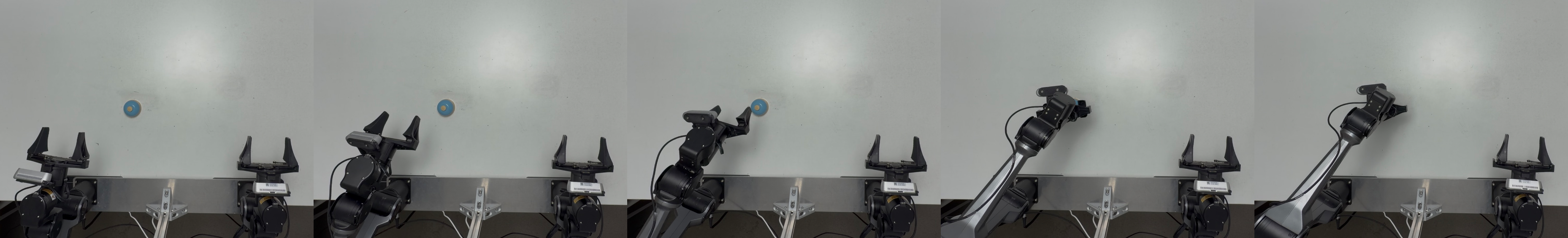}

    \vspace{2pt}

    \includegraphics[width=\linewidth,keepaspectratio]
    {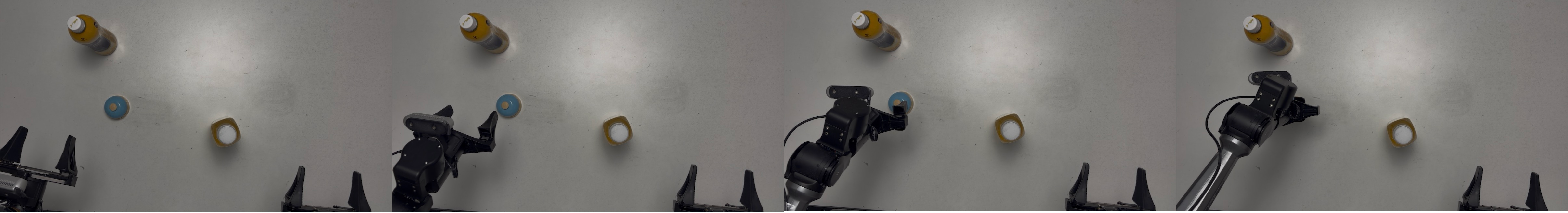}

    \vspace{2pt}

    \includegraphics[width=\linewidth,keepaspectratio]
    {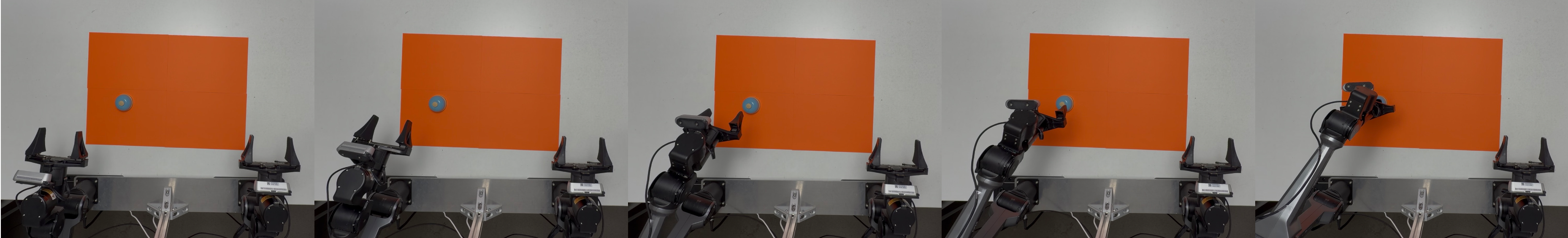}

    \vspace{2pt}

    \includegraphics[width=\linewidth,keepaspectratio]
    {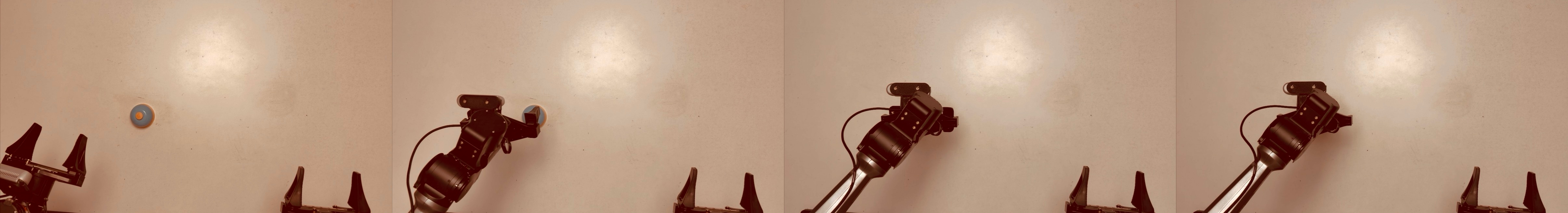}

    \caption{
    Representative successful real-world \emph{Click Bell} rollouts under
    Clean, Distractor, Background, and Lighting conditions (top to bottom).
    Each row shows a temporally ordered sequence of selected frames from one
    physical execution.
    }
    \label{fig:real_world_rollouts}
\end{figure*}

\end{document}